\newtheorem{theorem}{Theorem}
\def\BibTeX{{\rm B\kern-.05em{\sc i\kern-.025em b}\kern-.08em
    T\kern-.1667em\lower.7ex\hbox{E}\kern-.125emX}}
\DeclareMathAlphabet\mathbfcal{OMS}{cmsy}{b}{n}
\begin{document}

\title{Rate-Distortion in Image Coding for Machines
\thanks{\textsuperscript{*}Correspondence to \href{mailto:aharell@sfu.ca}{aharell@sfu.ca}.  Supported in part by Intel Labs.}
}

\author{\IEEEauthorblockN{Alon Harell\IEEEauthorrefmark{1}, Anderson De Andrade, and Ivan V. Baji\'{c}}
\IEEEauthorblockA{School of Engineering Science, Simon Fraser University, Burnaby, BC, Canada }
}


\maketitle

\begin{abstract}
In recent years, there has been a sharp increase in transmission of images to remote servers specifically for the purpose of computer vision. In many applications, such as surveillance, images are mostly transmitted for automated analysis, and rarely seen by humans. Using traditional compression for this scenario has been shown to be inefficient in terms of bit-rate, likely due to the focus on human based distortion metrics. Thus, it is important to create specific image coding methods for joint use by humans and machines. One way to create the machine side of such a codec is to perform feature matching of some intermediate layer in a Deep Neural Network performing the machine task. In this work, we explore the effects of the layer choice used in training a learnable codec for humans and machines. We prove, using the data processing inequality, that matching features from deeper layers is preferable in the sense of rate-distortion. Next, we confirm our findings empirically by re-training an existing model for scalable human-machine coding. In our experiments we show the trade-off between the human and machine sides of such a scalable model, and discuss the benefit of using deeper layers for training in that regard.
\end{abstract}

\begin{IEEEkeywords}
Image coding, Deep neural networks, Collaborative intelligence, Object detection
\end{IEEEkeywords}

\section{Introduction}
In recent years, Deep Neural Networks (DNNs) have become the preferred method for solving many computer vision (CV) problems~\cite{CHAI2021100134}. 
At the same time, 
the complexity in both memory and floating point operations of the aforementioned models has grown tremendously~\cite{growth}.  That growth, alongside the increasing variety of CV models and their uses, poses a challenge for deployment to low-resource edge devices such as mobile phones, smart speakers, etc. For this reason, in many CV tasks, the most common approach today~\cite{kang2017neurosurgeon} is to simply avoid deploying resource-intensive models to edge devices. Instead, the bulk of the computation is performed on remote servers (the cloud) equipped with powerful graphics processing units (GPU) or tensor processing units (TPU).



Transmitting an image for inference on the cloud using traditional coding methods, which have been designed to accommodate human perception, has been shown to be sub-optimal both theoretically~\cite{hyomin} and empirically~\cite{hyomin,kang2017neurosurgeon}. Collaborative intelligence (CI)~\cite{kang2017neurosurgeon} suggests dividing a given CV model to a frontend, deployed on the edge, and backend, which remains on the cloud. Inference can then be started on the edge, creating intermediate representations at some layer $l$ of the CV model, which we denote $\mathcal{F}^{(l)}$. 
These are then sent to the cloud backend, where inference is finished. Of course, this approach requires defining an efficient method for the communication of said representations (generally floating point tensors) to the cloud, which has been explored in several subsequent works~\cite{saeed_multi_task_learning,Choi2018NearLosslessDF,choi2018deep}.

Inspired by CI, the work of \cite{hyomin} proposes replacing the CV frontend with an encoder from a learnable image coding backbone~\cite{cheng2020learned}. This replaces the floating point tensors with highly compressible latent features, which are named the \emph{base} representation. On the decoder side, these can be used to match the intermediate features for the CV backend, which can then perform inference. The matching of intermediate CV features from the compressible latent space is performed by a learned transform named the \emph{Latent Space Transform} (LST). An important aspect of~\cite{hyomin}, is that it allows for human vision of the analysed image in a scalable manner. The \emph{base} representation is combined with a second representation, named \emph{enhancement}, to reconstruct the original image on the cloud side. In terms of rate-distortion, this method outperforms traditional as well as learnable coding schemes for the CV task, without significant degradation with regards to reconstruction for humans.

In this work we further investigate the behaviour of ~\cite{hyomin} in the context of both object detection and input reconstruction. We provide theoretical proof that performing feature matching on deeper layers of a CV model is beneficial 
in the sense of rate-distortion. Based on our proofs, we propose changes to the training loss of~\cite{hyomin} to improve rate-distortion behaviour for object detection. Using the proposed method, we achieve superior rate-distortion performance for the object detection task, which we consider to be empirical evidence of the theory presented in our proofs. Finally, we explore the trade-off in rate-distortion performance between computer and human vision in scalable compression, and discuss the consequences of our approach in that regard.

\section{Background}
\label{sec:background}
\subsection{Feature Matching in Collaborative Intelligence}
\label{subsec:man-machine}


When splitting a model for use in CI, it is important to distinguish between models (known as single-stream) where layer operations are performed purely sequentially, and ones (known as multi-stream) where layers are combined in complex architectures including skip-connections and multi-resolution computations~\cite{Redmon2018_yolov3, ren2015faster, he2017mask}.
Often, a CV model will be comprised of some single-stream layers followed by a multi-stream section. We consider the output of a certain layer, $\mathcal{F}^{(l)}$, to be a single stream feature if we can perform inference by obtaining $\mathcal{F}^{(l)}$ from the input, and then passing it on to the rest of the model. Formally, we require $T = G_l(\mathcal{F}^{(l)})$, where $T$ is the CV task and $G_l(\cdot)$ is the operation described by all subsequent layers after $l$. Conversely, if the output of the task model cannot be inferred solely from  $\mathcal{F}^{(l)}$, we consider it a multi-stream feature.

In collaborative intelligence, the intermediate features produced on the edge device must be transmitted (or recreated from a latent representation) to be used downstream by the CV backend. Formally, we need to match, on the server side, the features at some small subset $K \ll L$, of the model's $L$ layers 
$\mathcal{F}^{(l_k)}$ where $k = 1,2,..,K$. Following our definitions above, we see that in the case $K=1$, we require $\mathcal{F}^{(l_1)}$ to be a single-stream feature. In most multi-stream models, this is often only achievable by selecting a relatively early layer. Unfortunately, as we will prove below, matching earlier features comes at a cost in terms of rate-distortion performance.

\subsection{Compression and Rate-Distortion Analysis}

Compression, in general, can be divided into lossless compression, where a source random variable (RV) $\mathbf{X}$ is reconstructed perfectly after decompressing; and lossy compression, where after decoding we are left with an approximation $\widehat{\mathbf{X}}$ (approximations or quantizations are denoted with a hat operator). For a single pair of observations $(\mathbf{x},\hat{\mathbf{x}})$, we measure the amount of inaccuracy introduced by the approximation using some distortion metric $d(\mathbf{x},\hat{\mathbf{x}})$. This leads to an important concept in lossy compression known as the \emph{rate-distortion function} $R(D)$~\cite{Cover_Thomas_2006}, given by the following:
\begin{equation}
    R(D) = \min_{p(\hat{\mathbf{x}}|\mathbf{x})~:~\mathbb{E} \left[d(\mathbf{X},\widehat{\mathbf{X}})\right]~\leq~D} I(\mathbf{X};\widehat{\mathbf{X}}).
    \label{eq:RD_function} 
\end{equation}
Here, $p(\hat{\mathbf{x}}|\mathbf{x})$ is the conditional distribution of the approximation given the source; $\mathbb{E}[d(\mathbf{X},\widehat{\mathbf{X}})]$ is the expected distortion with respect to the joint distribution $p(\mathbf{x},\hat{\mathbf{x}})$); $I(\mathbf{X};\widehat{\mathbf{X}})$ is the mutual information between $\mathbf{X}$; and $\widehat{\mathbf{X}}$; and $D$ is some value of distortion. 

Because the marginal distribution of the source, $p(\mathbf{x})$ is fixed, the joint distribution, $p(\mathbf{x},\hat{\mathbf{x}})$ only changes through the conditional distribution of the approximation $p(\hat{\mathbf{x}}|\mathbf{x})$. 
Using that, we can understand the minimization in the rate distortion function as finding an approximation $\widehat{\mathbf{X}}$ that gives the lowest mutual information with the source $\mathbf{X}$ while allowing an expected distortion no greater than $D$. Then, using the source coding theorem~\cite{Cover_Thomas_2006} and its converse, it can be shown that the resulting bit-rate\footnote{This assumes we calculate mutual information using $\log_2(\cdot)$.} is the lowest achievable rate (per source symbol) giving an expected distortion bound by $D$. In this sense, $R(D)$ is a fundamental bound on the performance of lossy compression, similarly to entropy being a bound on lossless compression. 


\section{Theoretical Discussion}
As explained in Section~\ref{subsec:man-machine}, a useful approach for communicating an image to a remote-server for some downstream CV task it to perform feature-matching on some intermediate layer of a DNN model. In fact, \cite{hyomin} provided theoretical and experimental proof that feature matching is preferable, in terms of rate distortion, to transmitting an entire image to the cloud. We build on this proof and show that the rate-distortion function when matching deeper layers is lower (better) compared to matching earlier ones.  To provide this proof we first need to define some notation. Let our CV model, with a task output previously denoted $T$ be defined as $f(\cdot)$, so that for a given input $\mathbf{X}$ we have $T=f(\mathbf{X})$. Next, let the mapping from the input $\mathbf{X}$ to a set of intermediate features $\mathbfcal{Y}_1\equiv\mathcal{F}^{(l_1)}$ be $g_1(\cdot)$ so that $\mathbfcal{Y}_1 = g_1(\mathbf{X})$, and the mapping from $\mathbfcal{Y}_1$ to the output $T$ be $h_1(\cdot)$ so that $T=h_1(\mathbfcal{Y}_1)$.  Next, define a second, deeper set of intermediate features $\mathbfcal{Y}_2\equiv\mathcal{F}^{(l_2)}$ (with $l_2>l_1$) and mappings $g_2(\cdot)$ and $h_2(\cdot)$ such that $\mathbfcal{Y}_2 = g_2(\mathbfcal{Y}_1)$ and $T = h_2(\mathbfcal{Y}_2)$. Note that under this notation, $f = h_1\circ g_1$ and $h_1 = h_2\circ g_2$. This notation is illustrated graphically below.
\begin{singlespace}
\begin{equation*}
\begin{tikzpicture}[
  inner sep=0pt,
  outer sep=0pt,
  baseline=(x.base),
]
  \tikzstyle{arrow} = [thin,->,>=stealth]
  
  \path[every node/.append style={anchor=base west}]
    (0, 0)
    \foreach \name/\code in {
      x/ \mathbf{X}~,
      tmp/\,\qquad,
      y1/ ~\mathbfcal{Y}_1~,
      tmp/\,\qquad,
      y2/ ~\mathbfcal{Y}_2~,
      tmp/\,\qquad,
      t/ ~T
    } {
      node (\name) {$\code$}
      (\name.base east)
    }
  ;
  \path[
    every node/.append style={
      anchor=base,
      font=\scriptsize,
    },
  ]
    (x.base) -- node[above=3.0\baselineskip] (f) {\small $f$} (t)
    (x.base) -- node[below=0.2\baselineskip] (g1) {\small $g_1$} (y1)
    (y1.base) -- node[above=1.8\baselineskip] (h1) {\small $h_1$} (t)
    (y1.base) -- node[below=0.2\baselineskip] (g2) {\small $g_2$} (y2)
    (y2.base) -- node[below=0.2\baselineskip] (h2) {\small $h_2$} (t)
  ;
  \draw [arrow] (x) -- (y1);
  \draw [arrow] (y1) -- (y2);
  \draw [arrow] (y2) -- (t);
    
  \begin{scope}[
    >={Stealth[length=5pt]},
    thin,
    rounded corners=2pt,
    shorten <=.3em,
    shorten >=.3em,
  ]
    
    \def\GebArrow#1#2#3{
      \draw[->]
        (#2.south) ++(0, -.3em) coordinate (tmp)
        (#1) |- (tmp) -| (#3)
      ;%
    }
    \GebArrow{x}{f}{t}
    \GebArrow{y1}{h1}{t}
  \end{scope}
\end{tikzpicture}
\label{eq:mappings}
\end{equation*}
\end{singlespace}

In the case of compression for machines, we are mainly interested in the output of our CV task, and thus we measure the distortion at $T$. By denoting this as $d_T$,  we have:
\begin{equation}
\begin{aligned}
\tilde{d}(\mathbf{x},\hat{\mathbf{x}}) = & d_T\left(f(\mathbf{x});f(\hat{\mathbf{x}})\right), \\ 
\tilde{d}(\mathbf{y}_1,\hat{\mathbf{y}}_1) = & d_T\left(h_1(\mathbf{y}_1),h_1(\hat{\mathbf{y}}_1)\right), \\ 
\tilde{d}(\mathbf{y}_2,\hat{\mathbf{y}}_2) = & d_T\left(h_2(\mathbf{y}_2),h_2(\hat{\mathbf{y}}_2)]\right).
\end{aligned}
\label{eq:distortion}
\end{equation}
It is important to note that the choice of distortion metric $d_T(\cdot,\cdot)$ depends on the CV task, and might generally differ from the metrics used for human vision. Next, we define the set of all approximations and their respective conditional distributions, which achieve a task distortion of at most $D$ as:
\begin{equation}
    \mathcal{P}_{\mathbf{X}}(D)=\left\{p\left(\hat{\mathbf{x}}|\mathbf{x}\right)~:~\mathbb{E}\left[\tilde{d}\big(\mathbf{X},\widehat{\mathbf{X}}\big)\right]\leq D\right\}.
    \label{eq:quant_X}
\end{equation}
Using~(\ref{eq:distortion}), we can obtain equivalent formulations for $\mathcal{P}_{\mathbfcal{Y}_1}(D)$ and $\mathcal{P}_{\mathbfcal{Y}_2}(D)$. This notation allows us to rewrite~(\ref{eq:RD_function}) as:
\begin{equation}
    R_{\mathbf{X}}(D) = \min_{p(\hat{\mathbf{x}}|\mathbf{x})~\in~\mathcal{P}_{\mathbf{X}}(D)} I(\mathbf{X};\widehat{\mathbf{X}}).
\end{equation}
Similarly, we can write equivalent formulations for $R_{\mathbfcal{Y}_1}(D)$ and $R_{\mathbfcal{Y}_2}(D)$.
We are now ready to state our first result.
\begin{theorem}
\label{thm:lossy}
For any distortion level $D\geq 0$, the minimum achievable rate for compressing $\mathbfcal{Y}_1$ is an upper bound to the minimum achievable rate for compressing $\mathbfcal{Y}_2$, that is, 
\begin{equation}
    R_{\mathbfcal{Y}_2}(D) \leq R_{\mathbfcal{Y}_1}(D).
    \label{eq:RD_bound}
\end{equation}
Equality occurs when $\mathbfcal{Y}_1$ can be recreated exactly from $\mathbfcal{Y}_2$.
\end{theorem}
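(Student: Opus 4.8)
\emph{Proof proposal.} The plan is to establish the bound by an achievability argument: to upper bound $R_{\mathbfcal{Y}_2}(D)$ it suffices to exhibit one feasible test channel for $\mathbfcal{Y}_2$ whose mutual information with the source does not exceed $R_{\mathbfcal{Y}_1}(D)$. First I would fix a conditional $p^\star(\hat{\mathbf{y}}_1 \mid \mathbf{y}_1) \in \mathcal{P}_{\mathbfcal{Y}_1}(D)$ attaining (or, up to $\varepsilon$, approaching) the minimum that defines $R_{\mathbfcal{Y}_1}(D)$, giving a joint law for $(\mathbfcal{Y}_1,\widehat{\mathbfcal{Y}}_1)$ with $\mathbb{E}[\tilde d(\mathbfcal{Y}_1,\widehat{\mathbfcal{Y}}_1)] \le D$ and $I(\mathbfcal{Y}_1;\widehat{\mathbfcal{Y}}_1) = R_{\mathbfcal{Y}_1}(D)$. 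Then, mimicking the deterministic map that generated the deeper features, I would set $\widehat{\mathbfcal{Y}}_2 := g_2(\widehat{\mathbfcal{Y}}_1)$ and let the candidate test channel for $\mathbfcal{Y}_2$ be the conditional $p(\hat{\mathbf{y}}_2 \mid \mathbf{y}_2)$ induced by marginalizing the joint law of $(\mathbfcal{Y}_1,\widehat{\mathbfcal{Y}}_1,\mathbfcal{Y}_2,\widehat{\mathbfcal{Y}}_2)$; since $\mathbfcal{Y}_2 = g_2(\mathbfcal{Y}_1) = g_2(g_1(\mathbf{X}))$, the marginal of $\mathbfcal{Y}_2$ here coincides with the source law used in $R_{\mathbfcal{Y}_2}(D)$, so this is a legitimate competitor.

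Two verifications remain. For feasibility I would use the identity $h_1 = h_2 \circ g_2$ to get, pointwise, $\tilde d(\mathbf{y}_2,\hat{\mathbf{y}}_2) = d_T\big(h_2(g_2(\mathbf{y}_1)),h_2(g_2(\hat{\mathbf{y}}_1))\big) = d_T\big(h_1(\mathbf{y}_1),h_1(\hat{\mathbf{y}}_1)\big) = \tilde d(\mathbf{y}_1,\hat{\mathbf{y}}_1)$, so the expected distortion is unchanged and the candidate lies in $\mathcal{P}_{\mathbfcal{Y}_2}(D)$. For the rate I would observe that $\mathbfcal{Y}_2$ is a deterministic function of $\mathbfcal{Y}_1$ and $\widehat{\mathbfcal{Y}}_2$ a deterministic function of $\widehat{\mathbfcal{Y}}_1$, which makes $\mathbfcal{Y}_2 - \mathbfcal{Y}_1 - \widehat{\mathbfcal{Y}}_1 - \widehat{\mathbfcal{Y}}_2$ a Markov chain; the data processing inequality then gives $I(\mathbfcal{Y}_2;\widehat{\mathbfcal{Y}}_2) \le I(\mathbfcal{Y}_1;\widehat{\mathbfcal{Y}}_1)$. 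Combining, $R_{\mathbfcal{Y}_2}(D) \le I(\mathbfcal{Y}_2;\widehat{\mathbfcal{Y}}_2) \le I(\mathbfcal{Y}_1;\widehat{\mathbfcal{Y}}_1) = R_{\mathbfcal{Y}_1}(D)$, which is~(\ref{eq:RD_bound}).

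For the equality clause I would run the symmetric construction: if $\mathbfcal{Y}_1 = \phi(\mathbfcal{Y}_2)$ for some map $\phi$, then $h_2 = h_1 \circ \phi$, and starting from an optimal test channel for $\mathbfcal{Y}_2$ and setting $\widehat{\mathbfcal{Y}}_1 := \phi(\widehat{\mathbfcal{Y}}_2)$ the same two computations yield feasibility at level $D$ and $I(\mathbfcal{Y}_1;\widehat{\mathbfcal{Y}}_1) \le I(\mathbfcal{Y}_2;\widehat{\mathbfcal{Y}}_2)$, hence $R_{\mathbfcal{Y}_1}(D) \le R_{\mathbfcal{Y}_2}(D)$; together with the first part this forces equality.

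I expect the only real care needed is the bookkeeping around the test channel --- justifying that defining $\widehat{\mathbfcal{Y}}_2$ via $\widehat{\mathbfcal{Y}}_1$ (rather than directly from $\mathbfcal{Y}_2$) still produces an admissible point of the minimization in $R_{\mathbfcal{Y}_2}(D)$, which holds because that minimization ranges over all joint laws with the prescribed source marginal and the Markov structure is exactly what powers the data-processing step. The non-attainment of the infimum, if it occurs, is handled by a routine $\varepsilon$-argument; I anticipate no other obstacle.
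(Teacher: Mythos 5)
Your proposal is correct and rests on the same underlying argument as the paper's: push an optimal test channel for $\mathbfcal{Y}_1$ forward through the deterministic map $g_2$, note that the task-level distortion is unchanged because $h_1 = h_2 \circ g_2$, and apply the data processing inequality to the chain $\mathbfcal{Y}_2 - \mathbfcal{Y}_1 - \widehat{\mathbfcal{Y}}_1 - \widehat{\mathbfcal{Y}}_2$; the paper merely delegates this achievability-plus-DPI argument to the cited result of Choi \emph{et al.} and checks that the hypotheses transfer under the relabeling $\mathbf{X}\to\mathbfcal{Y}_1$, $\mathbfcal{Y}_1\to\mathbfcal{Y}_2$ (and spells out the identical construction explicitly in its proof of Theorem~2). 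You go slightly further by actually proving the equality clause via the symmetric construction, which the paper only asserts --- the one caveat there is that your identity $h_2 = h_1\circ\phi$ must also hold on the support of the reconstruction $\widehat{\mathbfcal{Y}}_2$, not just of $\mathbfcal{Y}_2$, but this is a minor technicality.
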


\begin{proof}
In~\cite{hyomin}, the authors prove that for any one intermediate layer $\mathbfcal{Y}_1 = g_1(\mathbf{X})$, we have $R_{\mathbfcal{Y}_1}(D) \leq R_\mathbf{X}(D)$.
To prove our theorem, we simply need to show that we can replace $\mathbf{X}$ with $\mathbfcal{Y}_1$, as well as replace $\mathbfcal{Y}_1$ with $\mathbfcal{Y}_2$, all the while maintaining the conditions of the proof. 

First, we recall that the only condition on the source $\mathbf{X}$ is that it must have a fixed distribution $p(\mathbf{x})$ in the sense that it does not depend on the approximation (or quantization) process. However, since $\mathbfcal{Y}_1$ is directly computed from $\mathbf{X}$, its own distribution, which we denote $p_1(\mathbf{y}_1)$, induced by $\mathbfcal{Y}_1 = g_1(\mathbf{X})$ is also fixed (because $g_1$ is fixed). 

To finish, we want to show that the structure relating $\mathbf{X}$, $\mathbfcal{Y}_1$, and $T$, is maintained between $\mathbfcal{Y}_1$, $\mathbfcal{Y}_2$, and $T$. However, because there are no special requirements on the functions $f,g_1,h_1$ in the original formulation (not even that they are deterministic), this is trivial. To do this, we can simply replace the function $f$ with $h_1$, $g_1$ with $g_2$, and $h_1$ with $h_2$, thus concluding our proof.
\end{proof}


In~\cite{hyomin}, the authors define a two-layer\footnote{The word layer here refers to the base and enhancement portion of the model, and not to DNN layers.} network where the \emph{base} representation is used to compress an input for object detection by a YOLOv3 vision backend \cite{Redmon2018_yolov3}, and the \emph{enhancement} is used (together with \emph{base}) for input reconstruction. To compress the image for object detection, the authors perform feature matching on the 13\textsuperscript{th} layer of YOLOv3, 
because $\mathcal{F}^{(13)}$ is the deepest single-stream feature in YOLOv3.

In their three-layer model, the authors in~\cite{hyomin} utilize the \emph{base} and first \emph{enhancement} layers to perform feature matching on more complex architectures of Faster R-CNN~\cite{ren2015faster} and Mask R-CNN~\cite{he2017mask}, which are used for object detection and segmentation, respectively. There, because of the multi-stream structure of a feature-pyramid network (which is part of both R-CNN architectures), they use an alternative approach to feature matching. In this method, the target feature is reconstructed in two steps. First, using an LST, a single-stream feature $\mathcal{F}^{(l_0)}$ is reconstructed for some small $l_0$. Next, using pre-trained portions of the CV model (denoted $H_k$, and collectively named the CV mid-model), several multi-stream features  $\mathcal{F}^{(l_k)} = H_k(\mathcal{F}^{(l_0)}), k=1,2,..., K$ are recreated. For clarity, when using this approach we refer to $l_0$ as the partition point~\cite{kang2017neurosurgeon}, and $\mathcal{F}^{(l_k)}, k=1,2,..., K$ as downstream features.


Evaluating distortion in the manner used in Theorem~\ref{thm:lossy}, is equivalent to using the two-step approach to reconstructing the task output $T$, from the intermediate features $\mathbfcal{Y}_1$ or $\mathbfcal{Y}_2$. Using this distortion during training is impractical because it would require a labeled dataset of uncompressed images for each of the desired tasks. Unfortunately, datasets containing uncompressed images with CV task labels are rare, 
making supervised training of a scalable model impractical. Instead, both models in~\cite{hyomin} use mean squared error (MSE) of the reconstructed features to evaluate distortion during training. We will prove next, that the use of deeper layers in this practical setting is still beneficial.

To evaluate the effect of measuring distortion on intermediate features, we need to extend the previously defined notation. First, we define the distortion measured at some variable $\mathbfcal{Z}$ as $d_\mathbfcal{Z}(\mathbf{z},\hat{\mathbf{z}})$. Next, the set of all (including two-step) reconstructions of \emph{$\mathbfcal{Y}_2$ from $\mathbfcal{Y}_1$} that achieve a distortion of no more than $D$ (as measured at $\mathbfcal{Y}_2$) is defined as:
\begin{equation}
\resizebox{.9\hsize}{!}{$\mathcal{P}_{\mathbfcal{Y}_{21}}(D)=\left\{p_{21}\left(\hat{\mathbf{y}}_2|\mathbf{y}_1\right):\mathbb{E}\left[d_{\mathbfcal{Y}_2}\left(g_2(\mathbfcal{Y}_1),\widehat{\mathbfcal{Y}}_2)\right)\right]\leq D\right\}$}
    \label{eq:cross_quant}
\end{equation}
Analogously, the rate-distortion function of $\mathbfcal{Y}_2$ from $\mathbfcal{Y}_1$ is:
\begin{equation}
    R_{\mathbfcal{Y}_{21}}(D) =  \min_{p_{21}(\widehat{\mathbf{y}}_2|\mathbf{y}_1)~\in~\mathcal{P}_{\mathbfcal{Y}_{21}}(D)} I(\mathbfcal{Y}_1;\widehat{\mathbfcal{Y}}_2)
\label{eq:cross_RD}
\end{equation}
Note that $\mathcal{P}_{\mathbfcal{Y}_{11}}(D)$ is simply the conventional rate-distortion function of $\mathbfcal{Y}_1$. Finally, we say that a function $\phi$ has a distortion magnitude of $\delta$ if, for any conditional distribution $q(\hat{\mathbf{z}}|\mathbf{z})$, we have 
$\mathbb{E}\left[d_{\phi(\mathbfcal{Z})}(\phi(\mathbfcal{Z}),\phi(\widehat{\mathbfcal{Z}}))\right] = \delta\cdot\mathbb{E}\left[d_\mathbfcal{Z}(\mathbf{z},\hat{\mathbf{z}})\right]$.\footnote{Note that 
commonly used operations, such as batch normalization, are designed to maintain relatively uniform feature magnitude in each layer, which in turn promotes distortion magnitude close to $\delta\!=\!1$. Alternatively, the distortion function $d_{\phi(\mathbfcal{Z})}$ can be scaled to achieve this behaviour in practice.} We are now ready to state and prove our next result.

\begin{figure*}[htbp]
\centerline{\includegraphics[width = 0.95\textwidth]{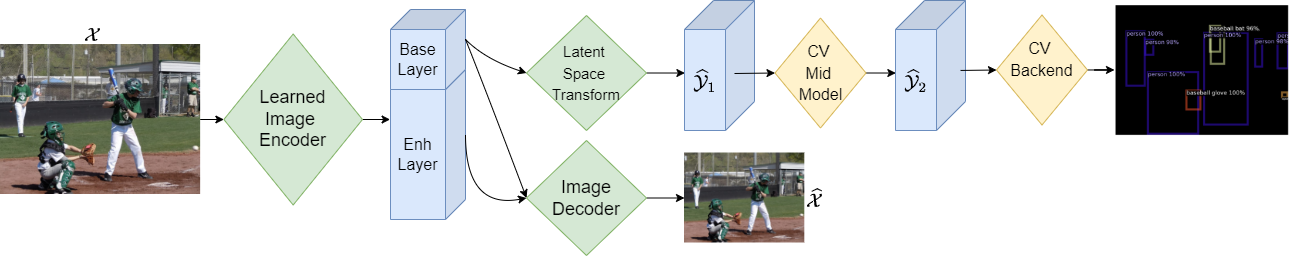}}
\caption{Block diagram of two-step feature matching.}
\label{fig:block}

\end{figure*}

\begin{theorem}
\label{thm:distortion}
Given a function $g_2$ with a distortion magnitude of $1$, the minimum achievable rate for compressing $\mathbfcal{Y}_2$ from $\mathbfcal{Y}_1$ is upper bounded by the minimum achievable rate for compressing $\mathbfcal{Y}_1$ from itself. That is: 
\begin{equation}
    R_{\mathbfcal{Y}_{21}}(D) \leq R_{\mathbfcal{Y}_{11}}(D).
    \label{eq:RD_bound_distortion}
\end{equation}
\end{theorem}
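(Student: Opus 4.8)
\medskip
\noindent\emph{Proof strategy.} The plan is to reuse the template behind Theorem~\ref{thm:lossy}, but to transport a \emph{reconstruction} through $g_2$ instead of transporting the source. First I would fix a test channel $p^\star(\hat{\mathbf{y}}_1\mid\mathbf{y}_1)\in\mathcal{P}_{\mathbfcal{Y}_{11}}(D)$ attaining the conventional rate-distortion value $R_{\mathbfcal{Y}_{11}}(D)=I(\mathbfcal{Y}_1;\widehat{\mathbfcal{Y}}_1)$ (or an $\epsilon$-suboptimal one, should the minimum fail to be attained). From it I would build a candidate channel for the cross problem by composing with $g_2$: declare $\widehat{\mathbfcal{Y}}_2=g_2(\widehat{\mathbfcal{Y}}_1)$, where $g_2$ acts through a fresh copy of its (possibly stochastic) kernel, conditionally independent of $\mathbfcal{Y}_1$ given $\widehat{\mathbfcal{Y}}_1$. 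This yields a legitimate $p_{21}(\hat{\mathbf{y}}_2\mid\mathbf{y}_1)$ and the Markov chain $\mathbfcal{Y}_1\rightarrow\widehat{\mathbfcal{Y}}_1\rightarrow\widehat{\mathbfcal{Y}}_2$.

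Next I would verify the two remaining requirements. For feasibility, I would invoke the distortion-magnitude hypothesis with $\phi=g_2$, $\delta=1$, and conditional law $p^\star$: it gives that the expected distortion measured at $\mathbfcal{Y}_2$ equals the expected distortion measured at $\mathbfcal{Y}_1$ under $p^\star$, which is at most $D$, so $p_{21}\in\mathcal{P}_{\mathbfcal{Y}_{21}}(D)$ (note that $\delta\le 1$ would in fact suffice). For the rate, the data processing inequality applied to $\mathbfcal{Y}_1\rightarrow\widehat{\mathbfcal{Y}}_1\rightarrow\widehat{\mathbfcal{Y}}_2$ yields $I(\mathbfcal{Y}_1;\widehat{\mathbfcal{Y}}_2)\le I(\mathbfcal{Y}_1;\widehat{\mathbfcal{Y}}_1)=R_{\mathbfcal{Y}_{11}}(D)$. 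Since $R_{\mathbfcal{Y}_{21}}(D)$ is the infimum of $I(\mathbfcal{Y}_1;\widehat{\mathbfcal{Y}}_2)$ over all feasible channels and $p_{21}$ is a feasible channel with $I(\mathbfcal{Y}_1;\widehat{\mathbfcal{Y}}_2)\le R_{\mathbfcal{Y}_{11}}(D)$, inequality~\eqref{eq:RD_bound_distortion} follows (letting $\epsilon\downarrow 0$ if needed).

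The step I expect to be most delicate is making the distortion-magnitude property apply cleanly to this constructed channel. Its definition quantifies over an arbitrary conditional law $q(\hat{\mathbf{z}}\mid\mathbf{z})$, and the instance needed here is $p^\star(\hat{\mathbf{y}}_1\mid\mathbf{y}_1)$ against the fixed marginal $p_1(\mathbf{y}_1)$, which matches the template; but I would want to be explicit that the symbol $\phi(\widehat{\mathbfcal{Z}})$ there is to be read as $g_2$ applied to the reconstruction through a copy independent of the instantiation that produced $\mathbfcal{Y}_2=g_2(\mathbfcal{Y}_1)$ --- exactly how the candidate is built --- so that the distortion identity transfers verbatim and the Markov property above is automatic (when $g_2$ is deterministic, as for a stack of DNN layers, this issue disappears entirely). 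Everything else is routine bookkeeping; as a consistency check, when $g_2$ is invertible one expects~\eqref{eq:RD_bound_distortion} to hold with equality, paralleling the equality clause of Theorem~\ref{thm:lossy}.
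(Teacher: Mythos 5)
Your proposal is correct and follows essentially the same route as the paper's proof: push the optimal test channel for $\mathbfcal{Y}_1$ through $g_2$, use the distortion-magnitude hypothesis with $\delta=1$ to establish feasibility in $\mathcal{P}_{\mathbfcal{Y}_{21}}(D)$, and apply the data processing inequality to the chain $\mathbfcal{Y}_1\rightarrow\widehat{\mathbfcal{Y}}_1\rightarrow g_2(\widehat{\mathbfcal{Y}}_1)$. Your added remarks on $\epsilon$-suboptimal channels, stochastic $g_2$, and the sufficiency of $\delta\le 1$ are refinements of the same argument rather than a different approach.
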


\begin{proof}
We begin by assuming we have an approximation $\widehat{\mathbfcal{Y}}_1$, represented by $p_1^{*}(\hat{\mathbf{y}}_1|\mathbf{y}_1)$, which achieves the conventional rate-distortion function of  $\mathbfcal{Y}_1$. This means that $\mathbb{E}\left[d_{\mathbfcal{Y}_1}\left(\mathbfcal{Y}_1,\widehat{\mathbfcal{Y}}_1\right)\right]\leq D$ and also $I(\mathbfcal{Y}_1;\widehat{\mathbfcal{Y}}_1)= R_{\mathbfcal{Y}_{11}}(D)$. We then define $\tilde{\mathbfcal{Y}}_2 = g_2(\widehat{\mathbfcal{Y}}_1)$, which induces a conditional distribution $p_{21}(\tilde{\mathbf{y}}_2|\mathbf{y}_1)$. Because the distortion magnitude of $g_2$ is $\delta = 1$ , we know that $p_{21}(\tilde{\mathbf{y}}_2|\mathbf{y}_1)\in \mathcal{P}_{\mathbfcal{Y}_{21}}(D)$ as:
\begin{equation}
    \begin{aligned}
        \mathbb{E}\left[d_{\mathbfcal{Y}_2}\left(g_2(\mathbfcal{Y}_1),\tilde{\mathbfcal{Y}}_2\right)\right] = & \: \mathbb{E}\left[d_{g_2(\mathbfcal{Y}_1})\left(g_2(\mathbfcal{Y}_1),g_2(\widehat{\mathbfcal{Y}}_1)\right)\right]\\
        = & \: \mathbb{E}\left[d_{\mathbfcal{Y}_1}\left(\mathbfcal{Y}_1,\widehat{\mathbfcal{Y}}_1\right)\right] \leq D.
    \end{aligned}
\end{equation}
Next, we note that the conditional distribution $p_{21}(\tilde{\mathbf{y}}_2|\mathbf{y}_1)$ is induced by the the following Markov chain:
\begin{equation*}
\begin{tikzpicture}[
  inner sep=0pt,
  outer sep=0pt,
  baseline=(x.base),
]
  \tikzstyle{arrow} = [thin,->,>=stealth]
  
  \path[every node/.append style={anchor=base west}]
    (0, 0)
    \foreach \name/\code in {
      x/ \textcolor{white}{\widehat{\mathbf{X}}}\mathbfcal{Y}_1~,
      tmp/\,\qquad \qquad,
      x_hat/ ~\widehat{\mathbfcal{Y}}_1~,
      tmp/\,\qquad,
      t_tilde/ ~\widetilde{\mathbfcal{Y}}_2%
    } {
      node (\name) {$\code$}
      (\name.base east)
    }
  ;
  \path[
    every node/.append style={
      anchor=base,
      font=\scriptsize,
    },
  ]
    (x.base) -- node[below=0.2\baselineskip] (p) {~~~$p_1^{*}(\hat{\mathbf{y}}_1|\mathbf{y}_1)$} (x_hat)
    (x_hat.base) -- node[below=0.2\baselineskip] (g) {$g_2$} (t_tilde)
  ;
  \draw [arrow] (x) -- (x_hat);
  \draw [arrow] (x_hat) -- (t_tilde);
\end{tikzpicture}
\label{eq:producing_T_tilde}
\end{equation*}
We apply the data processing inequality~\cite{Cover_Thomas_2006} to this chain to get $I(\mathbfcal{Y}_1;\tilde{\mathbfcal{Y}}_2) = I\left(\mathbfcal{Y}_1;g_2(\widehat{\mathbfcal{Y}}_1)\right) \leq I(\mathbfcal{Y}_1;\widehat{\mathbfcal{Y}}_1) = R_{\mathbfcal{Y}_{11}}(D)$.

Because we have already shown $p_{21}(\tilde{\mathbf{y}}_2|\mathbf{y}_1)\in \mathcal{P}_{\mathbfcal{Y}_{21}}(D)$ we can use the definition of the rate-distortion function of $\mathbfcal{Y}_2$ from $\mathbfcal{Y}_1$. As defined in~(\ref{eq:cross_RD}), $R_{\mathbfcal{Y}_{21}}(D)$ is the minimum of the mutual information across all conditional distributions in $\mathcal{P}_{\mathbfcal{Y}_{21}}(D)$ and thus we see that $R_{\mathbfcal{Y}_{21}}(D) \leq I(\mathbfcal{Y}_1;\tilde{\mathbfcal{Y}}_2)\leq R_{\mathbfcal{Y}_{11}}(D)$, thereby concluding our proof.
\end{proof}

\section{Experiments}

\subsection{Experimental setup}
To verify that our results hold in a practical setting, we modify the two-layer model presented in~\cite{hyomin}. Utilizing the original training algorithm, we use the following toss formulation to ensure good performance in terms of both task performance as well as bit-rate:
\begin{equation}
    \mathcal{L} = R + \lambda(D_{enh} + w \cdot D_{base}).
    \label{eq:loss}
\end{equation}
Here $\mathcal{L}$ is the training loss, $R$ is the bit-rate as estimated by a learned entropy model inspired by~\cite{cheng2020learned}, $D_{enh}$ and $D_{base}$ are the distortions of the image reconstruction and feature matching, respectively (both use mean square error, MSE, as their metric), and $\lambda$ and $w$ are Lagrange multipliers. By training the model using various values of $\lambda$, we produce rate-distortion curves for object detection and input reconstruction. 

Similarly to the approach taken in the three-layer case, we train the two-layer model using two-step feature matching, which can be seen in Fig.~\ref{fig:block}. In practice, using this two-step approach is equivalent to changing the distortion $D_{base}$ from being measured at the partition point to measuring it at some downstream layer features. Based on the multi-resolution structure of YOLOv3, we explore several combinations of layers, seen in  Table~\ref{tbl:mid_models}, to use for feature matching during training. When using more than one layer we combine the distortions by first flattening the tensors in question, then concatenating them, before calculating the MSE. Note that layer 108 is the final layer of YOLOv3, and it's output consists of multiple tensors, corresponding to different resolutions.

\begin{table}[h]
\centering
\caption{Experimental Setup and Results}
\label{tbl:mid_models}
\smallskip\noindent
\resizebox{1.0\linewidth}{!}{%
\begin{tabular}{@{}cccc@{}}
\toprule
Method & Feature & Base & Enhancement  \\ 
 & Layers & BD-Rate[\%]& BD-PSNR [dB]  \\ \midrule
HEVC & N/A & $89.27$ & $-1.04$ \\
VVC & N/A & $66.40$ & $\mathbf{0}$ \\ 
Cheng \textit{et al.}\cite{cheng2020learned} & N/A & $54.46$ & $-0.22$  \\ \midrule
Choi \textit{et al.}\cite{hyomin}, $w =0.06$ & $13$ & $0$ & $-0.92$  \\
Choi \textit{et al.}\cite{hyomin}, $w =0.12$ & $13$ & $-13.89$ & $-1.37$  \\ \midrule
Proposed - Mid-Model 1 & $\{13, 38\}$ & $-12.11$ & $-0.88$  \\
Proposed - Mid-Model 2 & $\{13, 38, 63\}$ & $-13.87$ & $-0.83$  \\
Proposed - Mid-Model 3 & $108$ & $\mathbf{-50.55}$ & $-2.56$  \\
\end{tabular}}
\end{table}

We train our model following a similar scheme to the one presented in~\cite{hyomin}, using a batch size of 16, and training in two stages. At first, we train for 400 epochs using a dataset comprised of randomly cropped patches from both the CLIC~\cite{clic_dataset} and JPEG-AI~\cite{jpeg_ai_dataset} datasets. Then we replace the dataset to a subset of VIMEO-90K~\cite{xue2019video_vimeo} and proceed to train for another 350 epochs (the use of a subset is due to the large amount of models to be trained).  The model is trained using an Adam optimizer with a fixed learning rate of $10^{-4}$ for the first stage which is then reduced with a polynomial decay every 10 epochs during the second stage.

After training, we evaluate our models in terms of rate-distortion performance for both object detection (base) and reconstruction (enhancement) tasks. For the base task we use the validation set of the COCO-2014 dataset~\cite{COCO}, which we resize to $512\times 512$ following the procedure in~\cite{hyomin}. In accordance with common practice for object detection, we use the mean average precision (mAP) as our metric. For the enhancement task we use the Kodak  dataset~\cite{kodak_dataset}, and PSNR as our evaluation metric. As we have proven in our theoretical discussion, we expect our modifications to lead to improved rate-distortion performance in the computer vision task, perhaps at some cost to the reconstruction task.

We compare our model against the original two-layer model, which we retrain using the same data for a fair comparison. Since our method favors the base task slightly, we also train a second version of the two-layer model, using the original layers for distortion but with a larger Lagrange multiplier for $D_{base}$, $w=0.012$. Lastly, we also compare our model against three baselines:  HEVC~\cite{hevc_std_2019}, VVC~\cite{vvc_std}, and a learned compression model~\cite{cheng2020learned}, where an image is fully decoded on the server-side before being passed to YOLOv3 for detection. 

\subsection{Results}

We begin by reviewing the results for the base task, as seen in Fig.~\ref{fig:results_base}. We can see that, as expected, the use of deeper layers, has led to improved rate-distortion performance, with the largest improvement obtained using the final layer of YOLOv3. To quantify the differences in rate-distortion performance, we compute the BD-Rate metric~\cite{Bjontegaard}, which measures the average difference in bits required to achieve equal performance (in terms of mAP in this case) to a baseline (we use~\cite{hyomin}). Once again, using the final layer achieves the best performance. Interestingly, increasing the Lagrange multiplier for the base task ($w$ in~\ref{eq:loss}) to 0.012, had a comparable effect to that of our method using mid-model 1 or 2.

\begin{figure}[htbp]
        \centering
         \includegraphics[width=\linewidth]{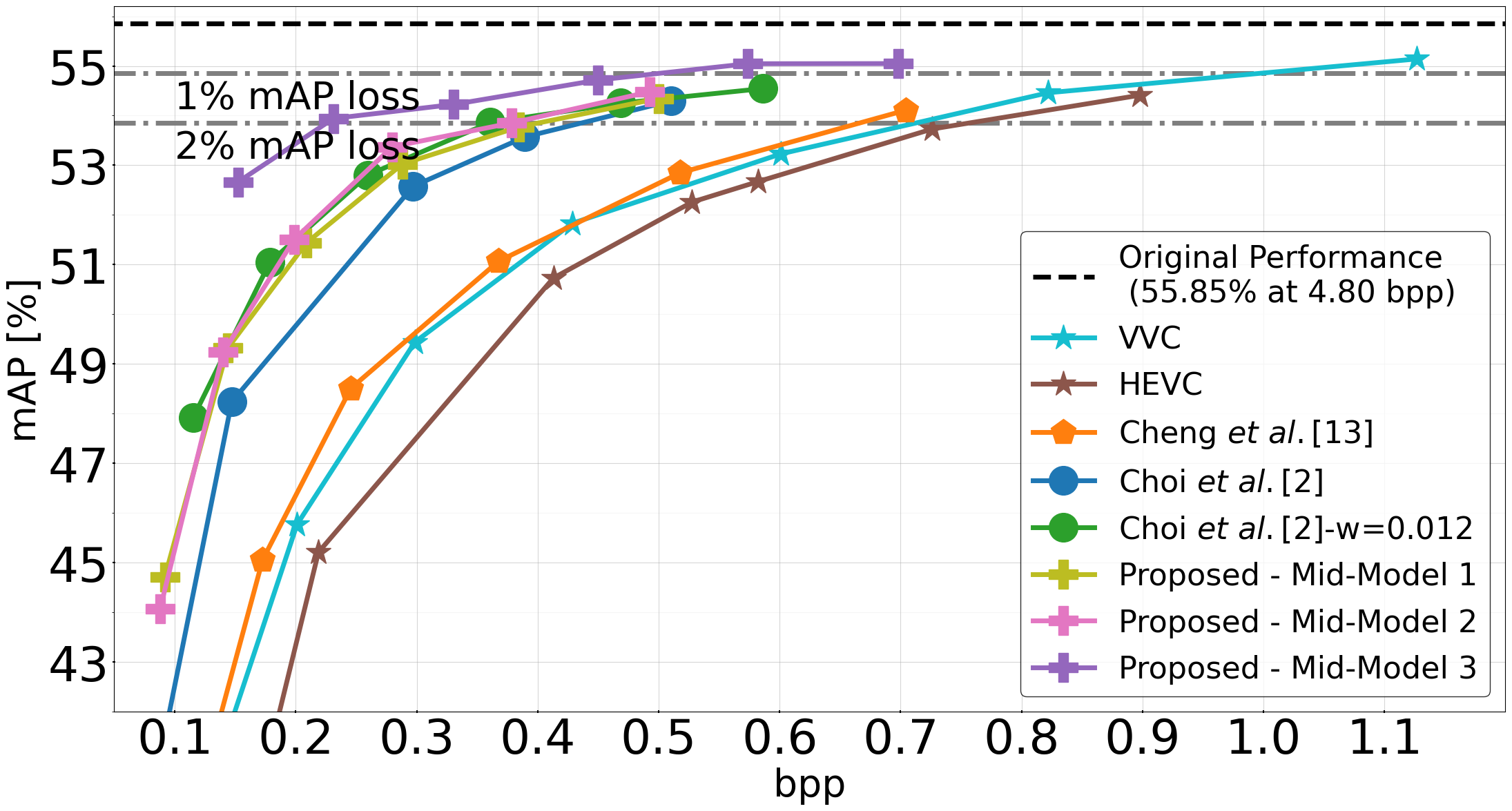}
         \begin{footnotesize}
         \caption{Mean Average Precision (mAP) vs. Bits per Pixel (bpp) for object detection using YOLOv3.}
         \label{fig:results_base}
         \end{footnotesize}
         
\end{figure}

Next, we observe our results on the enhancement task, shown in Fig.~\ref{fig:results_enh}. We summarise the performance using the BD-PSNR metric which measures the average difference in PSNR compared to a baseline (we use VVC~\cite{vvc_std}) at an equal bit-rate, seen in Table~\ref{tbl:mid_models}. Comparing the performance here to the base task we clearly see the trade-off between the two. This is most notable in the performance of our method using mid-model 3, which was the best method for object detection, and is the weakest here on image reconstruction. Importantly, our method using mid-model 1 or 2 performs slightly better than the modified version of \cite{hyomin} using $w=0.012$, which had comparable performance on the base task.

\begin{figure}[htbp] 
         \centering
         \includegraphics[width=\linewidth]{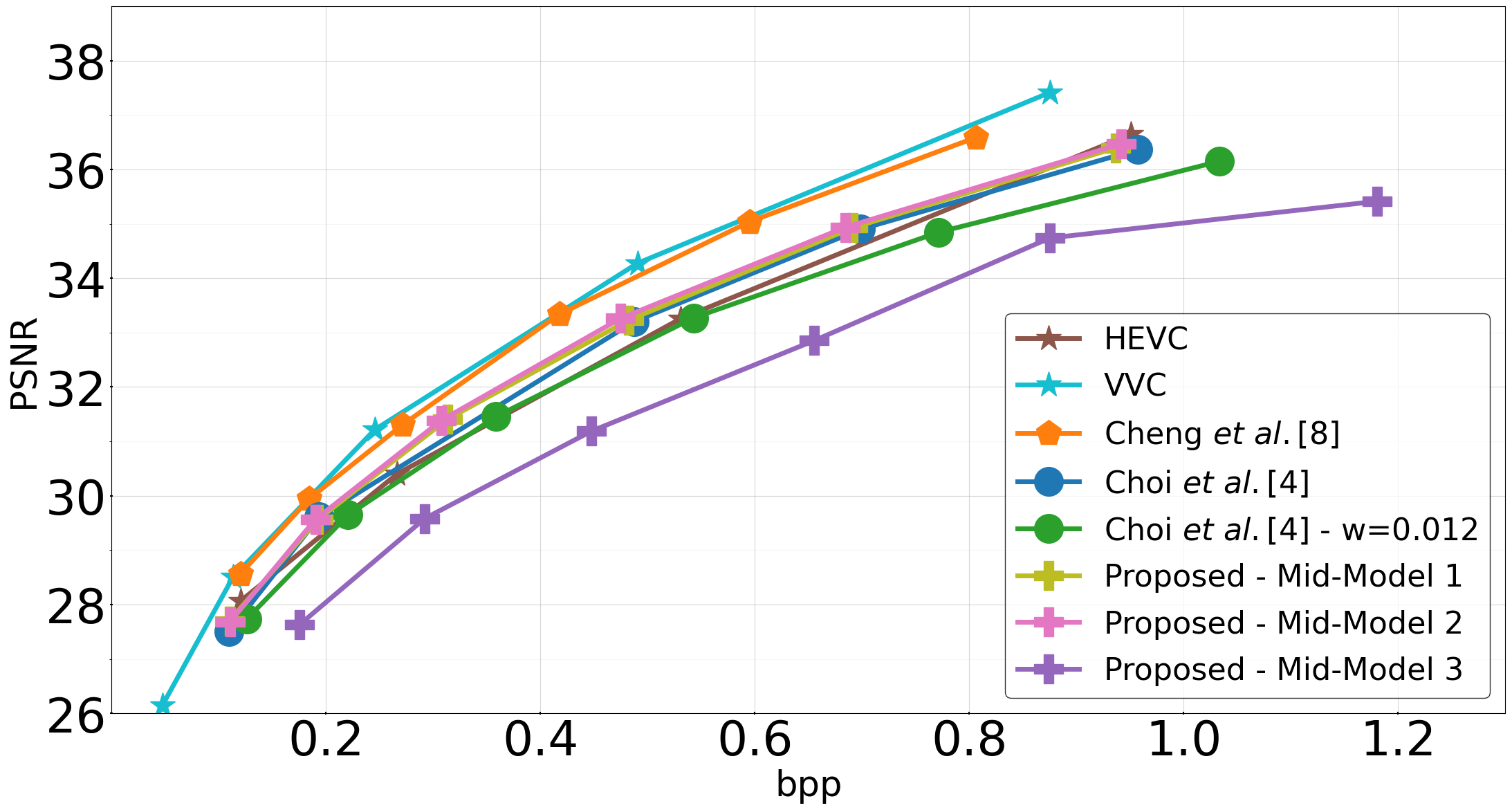}
         \begin{footnotesize}
         \caption{PSNR vs. Bits per Pixel (bpp) for image reconstruction.}
         \label{fig:results_enh}
         \end{footnotesize}
         
\end{figure}

\section{Summary and Discussion}

The growing prevalence of DNNs in the field of computer vision have lead to an increasing volume of images seen predominantly by machines. This, in turn, requires efficient coding methods for machines and a good understanding of their rate-distortion behaviour. In this paper we provided important theoretical background for understanding the implication of layer selection when encoding an image for feature matching in a downstream DNN model. We have proved that using deeper layers achieves superior rate-distortion performance compared to earlier ones. This is true both when evaluating distortion at the task level, as well as when the distortion is measured at some intermediate point.

To demonstrate the usefulness of our theoretical results, we used them to modify the training loss of an image coding method for humans and machines~\cite{hyomin}. The results clearly show that using our approach yields improvement in the rate-distortion performance of the machine side of the model, as expected. Importantly, the modifications do not require any change to the encoder side architecture, or to the decoding process, making our approach easy to implement in practice.  This is critical as it does not require more edge-device resources than simply using the learned image coding backbone~\cite{cheng2020learned}, on which it is based.

As might have been expected, the improved performance on the machine performance comes at some cost on the human side. Using a hyper-parameter ($w$), the framework in~\cite{hyomin} allows a designer to balance between performance on each side during training. When considering either model for a specific application, one might estimate the frequency at which images will need to be fully reconstructed and adjust the training procedure accordingly. Notably, when both models are adjusted to give equal rate-distortion performance on object detection, our model resulted in superior performance on image reconstruction compared with~\cite{hyomin}.

\bibliographystyle{IEEEtran}
\small
\bibliography{refs}

\end{document}